\let\emptyset\varnothing
\let\epsilon\varepsilon
\let\phi\varphi
\def\cl{\operatorname{cl}}
\def\min{\operatorname{min}}
\def\mod{\operatorname{mod}}
\def\X{{\cal X}}
\def\N{\mathbb N}
\def\S{\mathcal S}
\def\R{\mathbb R}
\def\C{\mathcal C}
\def\E{{\bf E}}
\def\v{{v}}
\def\x{{\mathbf{x}}}
\def\y{{\mathbf{y}}}
\def\z{{\mathbf{z}}}
\def\U{{\mathbf{U}}}
\def\nperp{{\mathrlap\perp{\,\setminus}}}
\def\I{{{}^{{s}}}\hskip-2ptI}
\def\hI{\widehat{{{}^{{s}}}\hskip-2ptI}}
\def\-as{\text{-a.s.}}
\newtheorem{theorem}{Theorem}
\newtheorem{definition}{Definition}
\newtheorem{lemma}{Lemma}
\newtheorem{proposition}{Proposition}
\newtheorem*{test}{TEST}
{\theoremstyle{definition}\newtheorem{remark}{Remark}}
\begin{document}
\title{Independence clustering (without a matrix)}
\author{Daniil Ryabko 
  \\ {\tt daniil@ryabko.net} }
\date{}
\maketitle

\begin{abstract} The  independence clustering problem is considered in the following formulation: given a set $S$ of random variables,  it is required to find the finest partitioning $\{U_1,\dots,U_k\}$ of  $S$ into clusters  such that the clusters $U_1,\dots,U_k$ are mutually independent. Since mutual independence is the target, pairwise similarity measurements are of no use, and thus traditional clustering algorithms are inapplicable.   The distribution of the random variables in $S$ is, in general, unknown, but a sample  is available. 
Thus, the problem is cast in terms of time series.  Two forms of sampling are considered: i.i.d.\ and stationary  time series, with the main emphasis being on the latter, more general, case. A consistent, computationally tractable algorithm for each of the settings is proposed, and a number of
open directions for further research are outlined. 
\end{abstract}


\section{Introduction}
Many applications face the situation where a set $S=\{\x_1,\dots,\x_N\}$ of samples has to be divided into clusters in such a way that inside each cluster the samples are dependent, but the  clusters between themselves  are as independent as possible. Here each $\x_i$ may itself be a sample, or a time series $\x_i=X^i_1,\dots,X^i_n$. For example, in financial applications, $\x_i$ can be a series of recordings of prices of a stock $i$ over time. The goal is to find the segments of the market such that different segments evolve independently, but within each segment the prices are mutually informative \cite{mantegna1999hierarchical,marti2016clustering}. In biological applications, each  $\x_i$ may be a DNA sequence, or may represent gene expression data \cite{zhou2004gene,priness2007evaluation}, or, in other applications, an fMRI record \cite{benjaminsson2010novel,kolchinsky2014multi}. 

The staple approach to this problem in applications is to construct a matrix of (pairwise) correlations  between the elements, and use traditional clustering methods, e.g., linkage-based methods or $k$ means and its variants, with this matrix \cite{mantegna1999hierarchical,marti2016clustering,marrelec2015bayesian}. If mutual information is used, it is used as a (pairwise) proximity measure between individual inputs, e.g.~\cite{kraskov2005hierarchical}.

We remark that pairwise independence is but a surrogate for (mutual) independence, and, in addition, correlation is   a surrogate for pairwise independence. There is, however,  no need to resort to surrogates unless forced to do so by statistical or computational hardness results. 

We therefore propose to reformulate the problem from the first principles, and then proceed to show that it is indeed solvable both statistically and  computationally~--- but calls for completely different algorithms.
The  
formulation proposed is as follows.

{\em
Given a  set $S=(\x_1,\dots,\x_N)$ of random variables, it is required to find the finest partitioning $\{U_1,\dots,U_k\}$ of  $S$ into clusters  such that the clusters $U_1,\dots,U_k$ are mutually independent.
}

To the author's knowledge, this problem in its full generality has not been addressed before. The formulation appears  in the work \cite{bach2003beyond} (which is devoted to optimizing a generalization of the ICA objective),
 but the case studied there (tree-structured dependence within the clusters) is  a one that admits a solution in terms of pairwise measurements of mutual information. 

Note that in the fully general case pairwise measurements are useless, as are, furthermore, bottom-up  (e.g., linkage-based) approaches. Thus, in particular, a proximity matrix cannot be used for the analysis. Indeed, it is easy to construct examples in which any pair or any small group of elements are independent, but are dependent when the same group is considered jointly with more elements. For instance, consider a group of  Bernoulli 1/2-distributed random variables  $\x_1,\dots,\x_{N+1}$, where $\x_1,\dots,\x_N$
are i.i.d.\ and $\x_{N+1}=\sum_{i=1}^N\x_i \operatorname{mod} 2$. Note that any $N$ out of these $N+1$ random variables are i.i.d., but together the $N+1$ are dependent.  Add then two more groups like this, say, $\y_1,\dots,\y_{N+1}$ and $\z_1,\dots,\z_{N+1}$ that have the exact same distribution, with the groups of $\x$, $\y$ and $\z$ mutually independent. Naturally, these are the three clusters we would want to recover. However, if we try to cluster the union of the three, then any algorithm based on pairwise correlations will return an essentially arbitrary result. What is more, if we try to find {\em clusters} that are pairwise independent, then, for example, the clustering $\{(\x_i,\y_i,\z_i)_{i=1..N}\}$ of the input set into $N+1$ clusters  appears correct, but, in fact, the resulting clusters are dependent. Of course, real-world data does not come in the form of summed up Bernoulli variables, but this simple example shows that considering independence of small subsets may be very misleading.

We separate the considered problem into the algorithmic and the statistical part. This is done by first considering the problem assuming the joint distribution of all the random variables is known, and is accessible via an oracle. Thus, the problem becomes computational. A simple, computationally efficient algorithm is proposed for this case. 
 We then proceed to the time-series formulations: the distribution of $(\x_1,\dots,\x_N)$ is unknown, but a sample $(X^1_1,\dots,X^N_1),\dots,(X^1_n,\dots,X^N_n)$ is provided, so that  $\x_i$ can be identified with the time series $X^i_1,\dots,X^i_n$. 
The sample may be either independent and identically distributed (i.i.d.), or, in a more general formulation, stationary. As one might expect, relying on the existing statistical machinery, the case of known distributions can be directly extended to the case of i.i.d.\ samples. Thus, we show that it is possible to replace the oracle access with statistical tests and estimators, and then use the same algorithm as in the case of known distributions. 
The general case of stationary samples turns out to be much more difficult, in particular because of a number of strong impossibility results. In fact, it is challenging already to determine what is possible and what is not from the statistical point of view. In this case, it is not possible to replicate the oracle access to the distribution, but only its weak version that we call {\em fickle oracle}.
We find that, in this case, it is only possible to have a consistent algorithm for the case of known $k$.  An algorithm that has  this property is constructed. This algorithm is computationally  feasible when the number of clusters $k$ is small, as its complexity is $O(N^{2k})$.  Besides, a measure  of information divergence is proposed for time-series distributions that may be of independent interest, since it can be estimated consistently without any assumptions at all on the distributions or their densities (the latter may not exist).

\noindent {\bf The  main results} of this work are theoretical. The goal is to determine, as a first step, what is possible and what is not from both statistical and computational points of view. The main emphasis is placed on highly dependent time series, as warranted by the applications cited above. Detailed experimental investigations of the proposed methods are left  for future work. The contribution can be summarized as follows:
\begin{itemize}
 \item a consistent, computationally feasible algorithm for the case of known distributions, unknown number of clusters, along with its extension to the case of unknown distributions and i.i.d.\ samples;
 \item an algorithm that is consistent under stationary ergodic sampling with arbitrary,  unknown distributions, but with a known number $k$ of clusters;
 \item an impossibility result concerning clustering under stationary ergodic sampling with $k$ unknown;
 \item an information divergence measure for stationary ergodic time-series distributions along with its estimator that is consistent without any extra assumptions.
\end{itemize}
In addition, an array of open problems and exciting directions for future work is proposed.

\noindent {\bf Related work.} Besides the work on independence clustering mentioned above, it is worth pointing out the relation to some other problems. First, the proposed problem formulation can be viewed as a  Bayesian-network learning problem: given an unknown network, it is required to split it into independent clusters. In general, learning a Bayesian network is NP-hard \cite{chickering1996learning}, even for rather restricted classes of networks (e.g., \cite{meek2001finding}). Here the problem we consider is much less general, which is why it admits a polynomial-time solution. A related clustering problem, proposed in \cite{Ryabko:10clust} (see also \cite{Khaleghi:15clust}) is clustering time series with respect to distribution. Here, it is required to put two time series samples $\x_1,\x_2$ into the same cluster if and only if their distribution is the same. Similar to the independence clustering introduced here, the problems admits a consistent algorithm if the samples are i.i.d.\ (or mixing) and the number of distributions (clusters) is unknown, and in the case of stationary ergodic samples if and only if $k$ is known. 
 A problem that is seemingly very much related, but, in fact, is rather different, is clustering with respect to mutual information. Here one seeks to find a clustering that maximizes the mutual information between the cluster labels and the input variables \cite{heading1991unsupervised}. While using similar terminology, this is in fact a very different problem, as here the inputs are seen as (independent) realization of the r.v. corresponding to the cluster labels; this leads to a completely different goal.

\noindent{\bf Organization.} The next section introduces the setup, defining the problem and consistency of algorithms. Section~\ref{s:know} presents the algorithm for the case of known distributions and proves its consistency. Section~\ref{s:iid} extends the latter algorithm to the case when the distribution is not known but an   i.i.d.\ sample of it is provided. Section~\ref{s:st} is devoted to the case when the samples  are not i.i.d.\ but stationary, presents impossibility results for this case, and a consistent algorithm for the case of known $k$. Section~\ref{s:ext} presents a number of possible extensions of the proposed methods, as well as an array of  open questions and research directions for future work.

\section{Set-up and preliminaries}
A set $S:=\{\x_1,\dots,\x_N\}$ is given, where we will either assume  that the joint distribution of $\x_i$  is known, or else that the distribution is unknown but a sample  $(X_1^1,\dots,X_n^1),\dots, (X_1^N,\dots,X_n^N)$ is given. In the latter case, we identify each $\x_i$ with the sequence (sample) $X^i_1,\dots,X^i_{n}$, or $X^i_{1..n}$ for short, of length $n$. The lengths of the samples are the same only for the sake of notational convenience; it is easy to generalize all algorithms to the case of different sample lengths $n_i$, but the asymptotic would then be with respect to $n:=\min_{i=1..N}n_i$.
It is assumed that $X^i_j\in\X:=\R$ are real-valued, but extensions to more general cases are straightforward. 

For random variables $A,B,C$ we write $(A\perp B)|C$  to say that $A$ is conditionally independent of $B$ given $C$, and $A\perp B\perp C$ to say that $A,B$ and $C$ are mutually independent.

\begin{definition}[Ground-truth clustering]
 The (unique up to a permutation) partitioning $\U:=\{U_1,\dots,U_k\}$ of the set $S$ is called the {\em ground-truth clustering} if $U_1,\dots,U_k$ 
are mutually independent ($U_1\perp\dots\perp U_k$) and  no refinement of $\U$ has this property. 
\end{definition}
\begin{definition}[Consistency] A clustering algorithm is consistent if it outputs the ground-truth clustering, and it is asymptotically consistent if it outputs the ground-truth clustering from some $n$ on with probability~1.
\end{definition}

For a discrete $A$-valued r.v.\ $X$ its Shannon entropy is defined as $H(X):=\sum_{a\in A}-P(X=a)\log P(X=a)$, letting $0\log0=0$.
For a distribution with a density $f$ its {\em (differential) entropy} is defined as $H(X)=:-\int f(x)\log f(x)$. 
For two random variables $X,Y$ their {\em mutual information} $I(X,Y)$ is defined as $I(X,Y)= H(X)+H(Y)-H(X,Y)$.
For discrete random variables, as well as for continuous ones with a density,  $X\perp Y$ if and only if $I(X,Y)=0$; see, e.g., \cite{Cover:06}. Likewise, $I(X_1,\dots,X_m)$ is defined as $\sum_{i=1..m}H(X_i) - H(X_1,\dots,X_m)$.

For the sake of convenience, in the next two sections we make the following assumption. 
{\\\noindent\bf Assumption 1.} (densities) All distributions in question have densities, and these densities are bounded away from zero on their support. 

However, it will be shown in Sections~\ref{s:st},\ref{s:ext} that this assumption can be gotten rid of as well.

\section{Known distributions}\label{s:know}
As with any statistical problem, it is instructive to start with the case where the (joint) distribution of all the random variables in question is known. Finding  out what can be done and how to do it in this case helps us to set the goals for the (more realistic) case of unknown distributions. 

Thus, in this subsection, $\x_1,\dots,\x_N$ are not time series, but random variables whose joint distribution is known to the statistician. The access to this distribution is via an oracle; specifically, our oracle will provide answers to the following questions about mutual information (where, for convenience, we assume that the mutual information with the empty set is 0): 
\begin{test}[\bf By oracle]
 Given sets of random variables $A,B,C,D\subset\{\x_1,\dots,\x_N\}$ answer whether $I(A,B)>I(C,D)$.
\end{test}

As we will show, such an oracle is sufficient to find the correct clustering. 
\begin{remark}[\bf Conditional independence oracle]\label{r:or}
Equivalently, one can consider an oracle that answers {\em conditional independence} queries of the form  $(A\perp B)|C$. The definition above is chosen for the sake of continuity with the next section, and it also makes the algorithm below more intuitive. Note, however, that in order to test conditional independence statistically one does not have to use mutual information, but may resort to any other divergence measure instead. 
\end{remark}

The proposed algorithm (see the pseudocode listing below) works as follows. It attempts to split the input set recursively  into two independent clusters, until it is no longer possible.
To split a set in two, it starts with putting one element $\x$ from the input set $S$ into a candidate cluster $C:=\{\x\}$, and measures its mutual information $I(C,R)$ with the rest of the set, $R:=S\setminus C$. If $I(C,R)$ is already 0 then we have split the set into two independent clusters and can stop. Otherwise, the algorithm then takes the elements out of $R$ one by one {\em without replacement} and each time looks  whether $I(C,R)$ has 
decreased. Once such an element is found,  it is moved from $R$ to $C$ and {\em the process is restarted} from the beginning with $C$ thus updated. 
Note that, if we have started with $I(C,R)>0$, then taking elements out of $R$ without replacement we eventually should find a one that decreases $I(C,R)$, since $I(C,\emptyset)=0$ and $I(C,R)$ cannot increase in the process.

\begin{algorithm}[h]
\caption{CLIN: cluster given a test for mutual information}
\label{alg:1}
\begin{algorithmic}
\STATE {INPUT: The set $S$.}
\STATE {$(C_1,C_2):=Split(S)$}
\IF {$C_2\ne\emptyset$}
 \STATE{ Output:$CLIN(C_1),CLIN(C_2)$}
 \ELSE {\STATE{ Output: $C_1$}}
\ENDIF
{\\\bf Function {Split}}{(Set $S$ of samples)}
\STATE {Initialize: $C:=\{\x_1\},$ $R:=S\setminus C$;} 
\WHILE {TEST($I(C;R)> 0$)}
\FOR {each $\x\in R$}
 \IF {TEST($I(C;R)>I(C;R\setminus\{\x\})$)}
   \STATE{move $\x$ from $R$ to $C$}
  \STATE{break the {\bf for} loop}
   \ELSE \STATE{move $\x$ from $R$ to $M$}
 \ENDIF
 \ENDFOR
 \STATE $M:=\{\},\  R:=S\setminus C$;
\ENDWHILE 
\STATE{Return($C$,$R$)}
\\\bf END function 
\end{algorithmic}
\end{algorithm}
\begin{theorem} The algorithm CLIN outputs the correct clustering using at most $2kN^2$ oracle calls. 
\end{theorem}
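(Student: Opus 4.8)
The plan is to prove the statement in two parts, correctness of the output and the bound on oracle calls, and to reduce both to a single claim about the subroutine Split. Fix the ground-truth clustering $\{U_1,\dots,U_k\}$, and whenever Split is invoked on a set $S'$ relabel so that the distinguished first element $\x_1$ lies in the cluster $U_1$ (of the ground truth of $S'$) containing it. The core is the claim that, on input $S'$, Split returns $(C,R)=(U_1,\,S'\setminus U_1)$, where $U_1$ is the indecomposable ground-truth cluster of $S'$ through $\x_1$. Granting this, consistency of CLIN follows by induction on the number of clusters: Split peels off exactly one whole cluster $U_1$ and leaves $S'\setminus U_1$, whose ground truth is the sub-collection $\{U_2,\dots,U_k\}$ of mutually independent, indecomposable clusters; so no cluster is ever split or merged, and the leaves of the recursion are precisely $U_1,\dots,U_k$.

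To establish the Split claim I would maintain the invariant $C\subseteq U_1$ and prove two facts. First, \emph{cross-cluster invariance}: if $C\subseteq U_1$ and $\x\in R$ lies in a cluster other than $U_1$, then $I(C;R)=I(C;R\setminus\{\x\})$; indeed, by mutual independence of the clusters the subvector $(C,R\cap U_1)$ of $U_1$ is independent of the elements of $R$ in the other clusters, whence $I(C;R)=I(C;R\cap U_1)$, and deleting a non-$U_1$ element leaves $R\cap U_1$ unchanged. Consequently every element whose removal \emph{strictly} decreases $I(C;R)$ must belong to $U_1$, so each element moved into $C$ preserves the invariant and is a genuine member of $U_1$. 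Second, \emph{telescoping/progress}: by the chain rule $I(C;R)=I(C;R\setminus\{\x\})+I(C;\x\,|\,R\setminus\{\x\})$ with a nonnegative last term, so deleting variables never increases mutual information; hence, along the scan order $\x^{(1)},\x^{(2)},\dots$, the values $I(C;\{\x^{(j)},\dots\})$ decrease from $I(C;R)$ down to $I(C;\emptyset)=0$. Whenever $I(C;R)>0$ the total drop is positive, so some consecutive difference is strictly positive, i.e.\ the \textbf{for} loop is guaranteed to find a decreasing element before exhausting $R$ (and never loops forever). Finally, the \textbf{while} loop runs exactly while $I(C;R)=I(C;U_1\setminus C)>0$ and halts precisely when $I(C;U_1\setminus C)=0$, i.e.\ $C\perp(U_1\setminus C)$; under Assumption~1 this means $C$ and $U_1\setminus C$ are independent, which by indecomposability of $U_1$ forces $C=U_1$. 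Thus Split returns $(U_1,\,S'\setminus U_1)$.

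For the complexity bound I would count oracle calls, noting that every test is a single query of the allowed form $I(A,B)>I(C,D)$: the \textbf{while} guard ``$I(C;R)>0$'' is $I(C;R)>I(\emptyset,\emptyset)$, and each \textbf{for}-loop comparison is already of that shape. In one Split on a set of size $m\le N$ returning a cluster of size $n'$, there are $n'$ guard tests, and in the iteration with $|C|=s$ at most $m-s$ loop tests, for a total of at most $n'+\sum_{s=1}^{n'-1}(m-s)\le N^2$ queries. The recursion tree is a full binary tree whose $k$ leaves are the single-cluster calls and whose $k-1$ internal nodes are the splitting calls, so there are $2k-1$ calls to Split in all. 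Multiplying, the total number of oracle calls is at most $(2k-1)N^2\le 2kN^2$. (A sharper summation over the peeling and leaf splits in fact yields a $k$-free bound of order $N^2$, but the stated $2kN^2$ suffices.)

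I expect the main obstacle to be the careful bookkeeping of the ``without replacement'' scan together with the auxiliary set $M$ and the reset $R:=S\setminus C$. One must verify that moving an element into $M$ never changes the currently tested value $I(C;R)$ (an element is moved only when its removal leaves $I(C;R)$ unchanged), so that the sequence of comparisons the algorithm actually performs is exactly the telescoping chain $I(C;\{\x^{(j)},\dots\})$ versus $I(C;\{\x^{(j+1)},\dots\})$, and progress is thereby guaranteed without ever resorting to pairwise or small-subset independence tests. The conceptually delicate point this resolves is that detecting high-order dependence requires only single-element removals against the \emph{growing} candidate cluster $C$, rather than joint tests over subsets, which is precisely why a linear scan, rather than a combinatorial search, already recovers the clustering despite the failure of pairwise methods stressed in the introduction.
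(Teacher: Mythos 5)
Your proof is correct, and its skeleton---analyze Split via the monotonicity/telescoping of $I(C;R)$ under element removal, conclude termination with $C\perp R$, recurse, then count oracle calls---is the same as the paper's. The organization of the correctness argument, however, rests on a different key claim, and the difference is worth noting. The paper proves the weaker statement that Split returns an independent bipartition if and only if one exists, asserting along the way that any element moved into $C$ satisfies $\x\nperp C$, and then needs the extra lemma that $(C_1,C_2)\perp C_3$ together with $C_1\perp C_2$ implies $C_1\perp C_2\perp C_3$ to conclude that recursive splitting yields mutually independent clusters. You instead prove the stronger, exact characterization that Split peels off precisely the ground-truth cluster $U_1$ containing $\x_1$, via the invariant $C\subseteq U_1$ and your cross-cluster invariance (removing an element outside $U_1$ leaves $I(C;R)$ unchanged), after which induction on the ground-truth structure finishes consistency with no composition lemma needed. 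Your route is not only self-contained but also patches a genuine imprecision in the paper: a strict decrease $I(C;R)>I(C;R\setminus\{\x\})$ means $I(C;\x\,|\,R\setminus\{\x\})>0$, which does \emph{not} imply the paper's literal claim $\x\nperp C$ (conditional dependence does not imply unconditional dependence; think of XOR-type variables, where removing an element independent of $C$ can still strictly decrease $I(C;R)$); what it does imply, by your contrapositive argument, is exactly $\x\in U_1$, which is all that is needed. Finally, you supply the count $(2k-1)N^2\le 2kN^2$ that the paper leaves as ``direct calculation''; your tally of guard tests and for-loop tests is sound, as is your parenthetical remark that summing the per-split costs over the recursion tree actually yields an $O(N^2)$ bound free of $k$.
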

\begin{proof}
 We shall first show that the procedure for splitting a set into two indeed splits the input set into two independent sets, if  and only if such two sets exist. First, note that if  $I(C,S\setminus C)=0$ then $C\perp R$ and the function terminates. In the opposite case, when  $I(C,S\setminus C)>0$, by removing an element from $R:=S\setminus C$, $I(C,R)$ can only decrease (indeed, $h(C|R)\le h(C|R\setminus\{x\})$ by information processesing inequality, e.g. \cite{Cover:06}). Eventually when all elements are removed, $I(C,R)=I(C,\{\})=0$, so there must be an element $\x$ removing which decreases $I(C,R)$. 
When such an element $\x$ found it is moved to $C$. Note that,  in this case, indeed $\x\nperp C$. However, it is possible that removing an element  $\x$ from $R$ does not reduce $I(C,R)$, yet $\x\nperp C$. This is why the \texttt{while} loop is needed, that is, the whole process has to be repeated until no elements can be moved to $C$. By the end of each \texttt{for} loop, we have either found at least one element to move to $C$, or we have assured that $C\perp S\setminus C$ and the loop terminates. Since there are only finitely many elements in $S\setminus C$ to begin with, the \texttt{while} loop eventually terminates. 
Moreover, each of the two loops (\texttt{while} and \texttt{for}) terminates in at most $n$ iterations.

Finally,  notice that if $(C_1,C_2)\perp C_3$ and $C_1\perp C_2$ then also $C_1\perp\C_2\perp C_3$, which means that by repeating the Split function recursively we find the correct clustering.

From the above, the bound on the number of oracle calls is easily obtained by  direct calculation.
\end{proof}
\section{I.I.D.\ sampling}\label{s:iid}
In this section we assume that the distribution of $(\x_1,\dots,\x_N)$ is not known, but an i.i.d.\ sample 
$(X^1_1,\dots,X^N_1),\dots,(X^1_n,\dots,X^N_n)$ is provided. We identify $\x_i$ with the (i.i.d.) time series $X^i_{1..n}$.

The case of i.i.d.\ samples is not much different from the case of a known distribution. The only difference is that we need to replace the TEST oracle with  (nonparametric) statistical tests.  What we need is, first, a test for independence, which is needed to replace the oracle call TEST($I(C,R)>0$) in the \texttt{while} loop.  Such a test can be found in \cite{gretton2010consistent}. 
Second, we need an estimator of mutual information $I(X,Y)$, or, which is sufficient, for entropies, but with a rate of convergence. 
If we know that the rate of convergence is asymptotically bounded by, say, $t(n)$, we can take any $t'(n)\to0$ such that $t(n)=o(t'(n))$ and   decide our inequality as follows:  
if $\hat I(C;R\setminus\{\x\})<\hat I(C;R)-t'(n)$ then say that $I(C;R\setminus\{\x\})<I(C;R)$. The required rates of convergence, which are of order $\sqrt{n}$ under Assumption~1, can be found in \cite{beirlant1997nonparametric}.

Given the result of the previous section, it is clear that if the oracle is replaced by the tests described, then  CLIN is a.s.\ consistent. Thus, we have demonstrated the following. 
\begin{theorem}\label{th:2}
 Under Assumption~1, there is an asymptotically consistent algorithm for independence clustering with i.i.d.\ sampling.
\end{theorem}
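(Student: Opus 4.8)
The plan is to reduce Theorem~\ref{th:2} to Theorem~1 by showing that, under Assumption~1 and i.i.d.\ sampling, every oracle call made by CLIN can be simulated with a consistent statistical decision that is eventually correct with probability one. Since CLIN makes only finitely many oracle calls (at most $2kN^2$, as established in Theorem~1, and $N$, $k$ are fixed), it suffices to show that each individual call is decided correctly from some sample size $n$ on, almost surely; a union bound over the finitely many calls then yields almost-sure eventual correctness of the whole run, which is exactly asymptotic consistency in the sense of the Consistency definition.

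First I would classify the two kinds of queries CLIN issues. The \texttt{while}-loop query is the pure independence test TEST($I(C,R)>0$), i.e.\ a test of the hypothesis $C\perp R$ against dependence. Here I would invoke the consistent independence test of \cite{gretton2010consistent}: under $C\perp R$ it accepts independence and under dependence it rejects, each eventually almost surely. The \texttt{for}-loop query TEST($I(C;R)>I(C;R\setminus\{\x\})$) is a strict comparison of two mutual-information values. For this I would plug in the mutual-information (equivalently, entropy) estimators of \cite{beirlant1997nonparametric}, which under Assumption~1 converge at rate $t(n)=O(1/\sqrt n)$, and apply the thresholding rule already described in the text: choosing any $t'(n)\to0$ with $t(n)=o(t'(n))$, declare $I(C;R\setminus\{\x\})<I(C;R)$ exactly when $\hat I(C;R\setminus\{\x\})<\hat I(C;R)-t'(n)$.

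The key step is to argue that this thresholded comparison is eventually correct in both directions. By the information-processing inequality (used in the proof of Theorem~1), we always have $I(C;R\setminus\{\x\})\le I(C;R)$, so the only two cases are a strict gap, $I(C;R)-I(C;R\setminus\{\x\})=\delta>0$, or equality. In the strict-gap case, since $t'(n)\to0$ we eventually have $t'(n)<\delta/2$, and since the estimation error is $o(t'(n))$ the estimated gap exceeds $t'(n)$ almost surely for large $n$, so the test correctly reports a decrease. In the equality case the true gap is $0$, so the estimated difference is $o(t'(n))$ and hence eventually stays above $-t'(n)$, so the test correctly reports no decrease. This two-sided eventual correctness is what makes the thresholding rule sound, and it is also the main obstacle: everything hinges on the estimators attaining a rate $t(n)$ that is provably $o(t'(n))$ for some vanishing $t'(n)$, which is precisely the role of Assumption~1 in guaranteeing the $\sqrt n$-rate of \cite{beirlant1997nonparametric}.

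Finally I would assemble the pieces. Condition on the event that every one of the finitely many oracle calls in the run of CLIN is answered correctly; this event has probability~1 from some $n$ on by the union bound. On this event the statistical version of CLIN makes exactly the same sequence of moves as the oracle-driven version, so by Theorem~1 it outputs the ground-truth clustering. The only subtlety worth flagging is that the \emph{set} of queries CLIN issues is itself data-dependent (later queries depend on which elements were moved), but since at each step the decision matches the oracle's almost surely, the realized query sequence coincides with the oracle run's query sequence, so the finite union bound still applies. Hence CLIN with the substituted tests is asymptotically consistent, establishing Theorem~\ref{th:2}.
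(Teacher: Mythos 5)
Your proposal is correct and follows essentially the same route as the paper: the paper's proof likewise replaces the \texttt{while}-loop oracle call with the independence test of \cite{gretton2010consistent} and the \texttt{for}-loop comparison with the thresholded entropy-estimator rule (rates from \cite{beirlant1997nonparametric} under Assumption~1), then invokes Theorem~1. The paper simply states that consistency is then "clear," whereas you spell out the two-sided eventual correctness of the thresholded test, the union bound over the finitely many calls, and the data-dependent query-sequence point; these are exactly the details the paper leaves implicit.
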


\begin{remark}[\bf Necessity of the assumption]
 Note that the independence test of \cite{gretton2010consistent} is actually distribution-free, meaning that it does not need Assumption~1. Since the mutual information is defined in terms of densities, if we want to completely get rid of  Assumption~1, we would need to use some other measure of dependence for the test.  One such measure is defined in the next section already for the general case of process distributions. 
However, the rates of convergence of its empirical estimates under i.i.d.\ sampling remain to be studied.
\end{remark}
\begin{remark}[\bf Estimators vs.\ tests]\label{r:et}
As noted in Remark~\ref{r:or} above, the tests we are using are, in fact, tests for   (conditional) independence: the test $I(C;R)>I(C;R\setminus\{\x\})$ can be replaced with a test for $(C\perp\{\x\}|R\setminus\{\x\})$. Conditional independence can be tested directly, without estimating $I$ (see, for example \cite{zhang2011kernel}), potentially allowing for tighter performance guarantees under more general conditions. 
\end{remark}

\section{Stationary sampling}\label{s:st}
We now enter the hard mode. The case of stationary sampling presents numerous obstacles, some of which are, in fact, theoretical impossibility results: there are (provably) no rates of convergence, no independence test, and zero mutual information rate does not guarantee independence. Besides, some simple-looking questions regarding the existence of consistent tests, which indeed have simple answers in the i.i.d.\ case,  remain open in the stationary ergodic case. 

Despite all this, it is possible to obtain a computationally feasible asymptotically consistent independence clustering algorithm, although only for the case when the number of clusters is known. This parallels the situation of clustering according to the distribution \cite{Ryabko:10clust,Khaleghi:15clust}.

Thus, in this section we assume that the distribution of $(\x_1,\dots,\x_N)$ is not known, but a jointly stationary ergodic sample 
$(X^1_1,\dots,X^N_1),\dots,(X^1_n,\dots,X^N_n)$ is provided. We identify $\x_i$ with the (stationary ergodic) time series $X^i_{1..n}$.
In this section we {\em drop Assumption~1}; in particular, {\em densities do not have to exist}.

We start with some preliminaries about stationary and stationary ergodic processes (including the definitions thereof), followed by impossibility results for the problem at hand, and concluding with the algorithm proposed for the case of a known number of clusters.

\subsection{Preliminaries: stationary ergodic processes}
{\noindent\bf Stationary, ergodicity, information rate.}
 (Time-series) distributions, or  processes, are measures on the space $(\X^\infty,\mathcal F_{\X^\infty})$, where $\mathcal F_{\X^\infty}$ is the 
Borel sigma-algebra of $\X^\infty$. When talking about joint distributions of $N$ samples, 
it is distributions on the space $((A^N)^\infty,\mathcal F_{(A^N)^\infty})$ that are referred to, and this distinction will be often left implicit.

For a sequence $\x\in A^n$ and a set $B\in \mathcal B$ denote $\nu(\x,B)$
the {\em frequency} with which the sequence $\x$ falls in the set~$B$.
%
%
A process $\rho$ is {\em stationary}  if 
$
\rho(X_{1..|B|}=B)=\rho(X_{t..t+|B|-1}=B)
$
 for any measurable $B\in \X^*$ and $t\in\N$. We further abbreviate   $\rho(B):=\rho(X_{1..|B|}=B)$.
A stationary process $\rho$ is called {\em (stationary) ergodic} if
   the frequency of occurrence of each measurable
$B\in\X^*$ in a sequence $X_1,X_2,\dots$ generated by $\rho$ tends to its
a priori (or limiting) probability a.s.: 
$
\rho(\lim_{n\rightarrow\infty}\nu(X_{1..n},B)= \rho(B))=1.
$ 
By virtue of the ergodic theorem,  this definition can be shown to be equivalent 
 to the more standard definition of stationary ergodic processes given in terms of shift-invariant sets  \cite{Shields:98}. 
 Denote $\mathcal S$ and  $\mathcal E$ the sets  of all stationary and stationary ergodic processes correspondingly.

The  {\bf ergodic decomposition} theorem for stationary processes  (see, e.g., \cite{Gray:88}) states that
 any stationary process can be expressed as a mixture of stationary ergodic
processes. That is, a stationary process $\rho$ can be envisaged as first selecting a stationary ergodic distribution according to some measure $W_\rho$ over the set of all such distributions, and then using this ergodic distribution to generate the sequence.
More formally, for any $\rho\in\S$ there is a measure $W_\rho$ on $(\S,\mathcal F_\S)$, such 
that $W_\rho(\mathcal E)=1$, and $\rho(B)=\int d W_\rho(\mu)\mu(B)$, for any $B\in\mathcal F_{\X^\infty}$.

For a stationary time series $\x$, its {\em $m$-order entropy} $h_m(\x)$ is defined as $\E_{X_{1..m-1}}h(X_{m}|X_{1..m-1})$ (so the usual Shannon entropy is the entropy of order 0). By stationarity, the limit $\lim_{m\to\infty}h_m$ exists and equals $\lim_{m\to\infty}{1\over m}h(X_{1..m})$ (see, for example, \cite{Cover:06} for more details).
 This limit is called {\em entropy rate} and is denoted $h_\infty$. For $l$ stationary processes $\x_i=(X^i_1,\dots,X^i_n,\dots)$, $i=1..l$,  the $m$-order mutual information
is defined as $I_m(\x_1,\dots,\x_l):= \sum_{i=1}^lh_m(x_i)-h_m(\x_1,\dots,\x_l)$ and the {\em mutual information rate} is defined as the limit \begin{equation}\label{eq:mir}
I_\infty(\x_1,\dots,\x_l):=\lim_{m\to\infty} I_m(\x_1,\dots,\x_l).                                                                                                                                  
\end{equation}

{\noindent\bf Discretisations and a metric.}
For each $m,l\in\N$, let $B^{m,l}$ be  a partitioning of $\X^m$ into $2^l$ sets such that jointly these partitionings generate the Borel $\sigma$-algebra $\mathcal F_m$ of $\X^m$, i.e.\ $\sigma(\cup_{l\in\N}B^{m,l})=\mathcal F_m$. 

The  distributional distance between a pair of process distributions
$\rho_1,\rho_2$ is defined as follows~\cite{Gray:88}:
\begin{equation}\label{eq:dd}
 d(\rho_1,\rho_2)=\sum_{m,l=1}^\infty w_m w_l \sum_{B\in B^{m,l}} |\rho_1(B)-\rho_2(B)|,
\end{equation}
where we set  $w_j:=1/j(j+1)$, 
 but any summable sequence of positive weights may be used.
It is shown in \cite{Ryabko:103s} that empirical estimates of this distance are asymptotically consistent for arbitrary stationary ergodic processes, and these estimates are used in \cite{Ryabko:10clust,Khaleghi:15clust} to construct time-series clustering algorithms for clustering with respect to distribution. Here we will not use this distance in the algorithms, but only in the impossibility results.
Basing on these ideas, Gy\"orfi \cite{Gyorfi:11unp} suggested to use a similar construction for studying  independence, namely
$$
d(\rho_1,\rho_2)=\sum_{m,l=1}^\infty w_m w_l\hskip-4mm \sum_{A,B\in B^{m,l}} |\rho_1(A)\rho_2(B)-\rho(A\times B)|,
$$
where $\rho_1$ and $\rho_2$ are the two marginals of a process  $\rho$ on pairs, and noted that its empirical estimates are asymptotically consistent. Here we will use a similar distance which is based on mutual information instead.

\subsection{Impossibility results}\label{s:no}
First of all, while the definition of ergodic processes (or the ergodic theorem, if one follows the conventional definition)
guarantees   convergence of frequencies to the corresponding probabilities, this convergence can be arbitrary slow \cite{Shields:98}. That is, there 
are no meaningful bounds on $|\nu(X_{1..n},0)-\rho(X_1=0)|$ in terms of $n$,  for ergodic $\rho$.
This means that we cannot use tests for (conditional) independence 
 of the kind employed in the i.i.d.\ case (Section~\ref{s:iid}).

Thus, the first question we want to answer is whether it is possible to test independence, that is, to say whether $\x_1\perp\x_2$ based on a stationary ergodic samples $X^1_{1..n}, X^2_{1..n}$.  Here we shall show that the answer in a certain sense is negative, but some important questions remain open. 

A test (for independence) $\phi$  is a function that takes two samples  $X^1_{1..n}, X^2_{1..n}$ and  a parameter $\alpha\in(0,1)$, called the {\em confidence level}, and outputs a binary answer: the samples are independent or not.

%
%

 A test $\phi$ is {\em $\alpha$-level  consistent}  if, for every stationary ergodic distribution $\rho$ over a pair of samples $(X^1_{1..n..}, X^2_{1..n..})$, for every confidence level $\alpha$,  $\rho(\phi_\alpha(X^1_{1..n}, X^2_{1..n})=1)<\alpha$ if the marginal distributions of the samples are independent, and  $\phi_\alpha(X^1_{1..n}, X^2_{1..n})$ converges to~1 as $n\to\infty$ with $\rho$-probability~1 otherwise. 

The following  can be established  thanks to the criterion for the existence of such tests obtained in~\cite{Ryabko:121c}. Recall that, for $\rho\in\mathcal S$, the measure $W_\rho$ over $\mathcal E$ is its ergodic decomposition. The criterion of  \cite{Ryabko:121c} states that there is an $\alpha$-level consistent test for a hypothesis $H_0$ against $\mathcal E\setminus H_0$ if an only if for every $\rho\in\cl H_0$ we have $W_\rho(H_0)=1$.

\begin{proposition}\label{th:notest}
 There is no $\alpha$-level consistent  test for  independence (for jointly stationary ergodic samples). 
\end{proposition}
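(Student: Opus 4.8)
The plan is to apply the criterion of \cite{Ryabko:121c} recalled just above. Writing $H_0=\{\rho\in\mathcal E: X^1\perp X^2\}$ for the set of ergodic pair-distributions with independent marginals, that criterion says an $\alpha$-level consistent test exists if and only if $W_\rho(H_0)=1$ for every $\rho\in\cl H_0$. To prove the impossibility I would therefore exhibit a single stationary process $\rho\in\cl H_0$ whose ergodic decomposition charges dependent processes, i.e.\ $W_\rho(H_0)<1$. Such a witness is necessarily non-ergodic: if $\rho_k\in H_0$ and $d(\rho_k,\rho)\to0$, then, since the weights in \eqref{eq:dd} are positive, every finite-dimensional marginal of $\rho$ is a limit of product measures and hence itself a product, so $\rho$ has independent marginals; were $\rho$ also ergodic it would lie in $H_0$ and violate nothing. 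The whole difficulty is thus to produce a non-ergodic $\rho$ that has globally independent marginals yet splits into dependent ergodic components, and to certify that it lies in the closure of $H_0$.

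The construction I would use exploits the fact that a product of two ergodic processes need not be ergodic. Let $P$ be the deterministic period-two process $X_t=(t+\Phi)\bmod 2$ with phase $\Phi$ uniform on $\{0,1\}$; this is stationary ergodic (a rotation on two points). Take $\rho:=P\otimes P$, that is $X^1_t=(t+\Phi_1)\bmod 2$ and $X^2_t=(t+\Phi_2)\bmod 2$ with $\Phi_1,\Phi_2$ independent. By construction $X^1\perp X^2$, so $\rho$ has independent marginals. However $\rho$ is the rotation by $(1,1)$ on $\{0,1\}^2$, under which $\Phi_1-\Phi_2\bmod 2$ is invariant; hence $\rho$ is not ergodic and its two ergodic components are the uniform measures on the orbits $\{(0,0),(1,1)\}$ and $\{(0,1),(1,0)\}$. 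On the first component $X^1_t=X^2_t$ for all $t$, and on the second $X^1_t=1-X^2_t$; both are perfectly dependent, so neither lies in $H_0$ and $W_\rho(H_0)=0$.

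It remains to show $\rho\in\cl H_0$, and here I would approximate the brittle rotation $P$ by genuinely ergodic processes. Let $P_\epsilon$ be the stationary two-state Markov chain on $\{0,1\}$ with probability $1-\epsilon$ of switching state (uniform stationary law). For $0<\epsilon<1$ this chain is irreducible and aperiodic, hence mixing, so the product $\rho_\epsilon:=P_\epsilon\otimes P_\epsilon$ is mixing and in particular ergodic; being a product measure it has independent marginals, so $\rho_\epsilon\in H_0$. As $\epsilon\to0$ the probability of any deviation from perfect alternation over a fixed window tends to $0$, so every finite-dimensional distribution of $P_\epsilon$ converges to that of $P$; consequently each finite-dimensional distribution of $\rho_\epsilon$, being a product of those of $P_\epsilon$, converges to the corresponding one of $\rho$, and $d(\rho_\epsilon,\rho)\to0$. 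Thus $\rho\in\cl H_0$ while $W_\rho(H_0)=0<1$, and the criterion rules out any $\alpha$-level consistent test. The steps I expect to demand the most care are the verification that the approximants $\rho_\epsilon$ are ergodic (through mixing of the perturbed chain and the stability of mixing under products) and the passage $d(\rho_\epsilon,\rho)\to0$ through the weighted finite-dimensional sums defining $d$; the conceptual crux, and the reason no test can exist, is precisely that a product of ergodic processes may fail to be ergodic and break into dependent components.
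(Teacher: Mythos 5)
Your proof is correct and follows essentially the same route as the paper's: invoke the testability criterion of \cite{Ryabko:121c}, exhibit a stationary non-ergodic $\rho$ that is the product of two independent copies of an ergodic (but not weakly mixing) rotation-type process~--- so that its ergodic components are dependent pairs and $W_\rho(H_0)=0$~--- and place $\rho$ in $\cl H_0$ by perturbing the deterministic dynamics with noise. The only genuine difference is the concrete witness. The paper takes two independent copies of the translation process (irrational rotation of $[0,1]$, thresholded at $1/2$), whose self-product has a continuum of dependent ergodic components indexed by the phase difference, and approximates it by shifting the hidden states by $\alpha+u_i^\epsilon$ with $u_i^\epsilon$ i.i.d.\ uniform on $[-\epsilon,\epsilon]$. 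You take the rotation on two points (the period-two process with random phase), whose self-product has exactly two dependent ergodic components, and approximate by a nearly-alternating two-state Markov chain. Your variant is more elementary: ergodicity of the approximants can be obtained directly, since the product chain on the four states $\{0,1\}^2$ is itself irreducible and aperiodic, so you do not even need the ``product of mixing is mixing'' lemma that you flag as the delicate step; and $d(\rho_\epsilon,\rho)\to0$ reduces to convergence of finitely many atom probabilities on a finite alphabet. The paper's witness does double duty later in Section~\ref{s:no}, where its zero entropy rate is used to show that zero mutual information rate does not imply independence~--- though your period-two example has zero entropy rate as well and would serve there equally. One harmless imprecision: your opening aside that every $d$-limit of elements of $H_0$ has independent marginals is not immediate (the cells $B^{m,l}$ need not be product sets), but nothing in your argument relies on it.
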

\begin{proof}

The example is based on the so-called translation process, which is constructed as follows. 
Fix some irrational  $\alpha \in (0,1)$ and select $r_0 \in [0,1]$ uniformly at random. 
For each $i=1..n..$ let $r_i=(r_{i-1}+\alpha) \mod 1$ (that is, the previous element is shifted by $\alpha$ to the right, considering the [0,1] interval looped). 
The samples $X_i$ are obtained from $r_i$ by thresholding at $1/2$, 
i.e. $X_i:=\mathbb{I}\{r_i>0.5\}$ (here $r_i$ can be considered hidden states). This process is stationary and ergodic; besides, it has 0 entropy rate~\cite{Shields:98}, and this is not the last of its peculiarities. 

Take now two independent copies of this process to obtain a pair $(\x_1,\x_2)=(X_1^1,X_1^2\dots,X_n^1,X_n^2,\dots)$. 
 The resulting process on pairs, which we denote $\rho$, is stationary, but it is not ergodic. To see the latter, observe that the difference between the corresponding hidden states remains constant. 
In fact, each initial state $(r_1,r_2)$ corresponds to an ergodic component of our process on pairs. By the same argument, these ergodic components are not independent. Thus, we have taken two independent copies of a stationary ergodic process, and obtained a stationary process which is not ergodic and whose ergodic components are pairs of processes that are not independent!

To apply the criterion cited above, it remains to show that the process $\rho$ we constructed can be obtained as a limit of stationary ergodic processes on pairs. To see this, consider, for each $\epsilon$,  a process $\rho_\epsilon$, whose construction is identical to $\rho$ except that instead of shifting the hidden states by $\alpha$ we shift them by $\alpha+u_i^\epsilon$ where $u_i^\epsilon$ are i.i.d.\ uniformly random on $[-\epsilon,\epsilon]$. It is easy to see that $\lim_{\epsilon\to0} \rho_\epsilon=\rho$ in distributional distance, and all $\rho_\epsilon$ are stationary ergodic.  Thus, if $H_0$ is the set of all stationary ergodic distributions on pairs, we have found a distribution $\rho\in\cl H_0$ such that $W_\rho(H_0)=0$. We can conclude that  there is no $\alpha$-level consistent test for $H_0$ against its complement. 
\end{proof}

Thus, there is no consistent test that could provide a given level of confidence under $H_0$, even if only asymptotic consistency is required under $H_1$. However, a yet weaker  notion of consistency might  suffice to construct asymptotically consistent clustering algorithms. Namely, we could ask for a test whose answer converges to either 0 or 1 according to whether the distributions generating the samples are independent or not. Unfortunately, we do not know whether a test consistent in this weaker sense  exists or not.  We conjecture that it does not. The conjecture is based not only on the result above, but also on the result of \cite{Ryabko:10discr} that shows that there is no such test for the related problem of homogeneity testing, that is, for testing whether  two given samples have the same or different distributions. This negative result holds even if the distributions are independent, binary-valued, the difference is restricted to $P(X_0=0)$, and, finally, they have to be $B$-processes (a family of distributions much smaller than that of all stationary ergodic ones).

Thus, for now what we can say is that there is no test for independence  available that would be consistent under ergodic sampling. This means that we cannot distinguish even between  the cases of 1 and 2 clusters, and so we shall only consider  the problem of clustering with the number of clusters $k$ known. 

It is also worth noting that several related seemingly innocuous questions, that have simple answers for i.i.d.\ sampling, remain open for ergodic sampling. For example, for i.i.d.\ processes it is easy to show that $\alpha$-level  consistency is strictly stronger than asymptotic consistency just introduced, meaning that if there is an $\alpha$-level consistent test for a hypothesis $H_0$ against its complement then there exists an asymptotically consistent test for the same hypothesis, and the reverse is not always true. However, for ergodic sampling this remains to be demonstrated. Another open question (posed by \cite{Nobel:06}) is whether one can replace ``with probability 1'' with ``in expectation'' in the definition of asymptotic consistency; again, this is the case for i.i.d.\ (and mixing) distributions (that is, the two resulting notions are equivalent), but for ergodic distributions we do not know. One can pose the same question for $\alpha$-level consistency (where ``w.p.~1'' refers to the convergence of the Type~II error).

Finally, the last problem we will have to address is mutual information for processes. The analogue of mutual information for stationary processes is the mutual information rate~\eqref{eq:mir}. Unfortunately, 0 mutual information rate does not imply independence. This is manifest on processes with 0 entropy rate, for example those of the example in the proof of Proposition~\ref{th:notest}.
What happens is that, if two processes are dependent, then indeed at least one of the $m$-order entropy rates $I_m$ is non-zero, but the limit may still be zero. Since we do not know in advance which $I_m$ to take, we will have to consider all of them, as is explained in the next subsection.

\subsection{Clustering with the number of clusters known}
The  quantity introduced below, which we call sum-information, will serve as an analogue of mutual information in 
the i.i.d.\ case, allowing us to get around the problem that the mutual information rate may be~0 for a pair of dependent stationary ergodic processes. Defined in the same vein as the distributional distance~\eqref{eq:dd}, this new quantity is a weighted sum over all the mutual informations up to time $n$; in addition, all the individual mutual informations are computed for quantized versions of random variables in question, with decreasing cell size of quantization, keeping all the mutual information resulting  from different quantizations. The latter allows us not to require the existence of densities. Weighting is needed in order to be able to obtain consistent empirical estimates of the theoretical quantity under study.

First, for a process $\x=(X_1,\dots,X_n,\dots)$ and for each $m,l\in\N$ define the $l$'th quantized version $[X_{1..m}]^l$  of $X_{1..m}$ 
as the index of the cell of $B^{m,l}$ to which $X_{1..m}$ belongs. 
Recall that each of the partitions $B^{m,l}$ has cell size  $2^l$, and that $w_l:=1/l(l+1)$.
Note that, compared to the distributional distance~\eqref{eq:dd}, here we have additional scaling weights that are needed to bring the summands into $[0,1]$.
\begin{definition}[sum-information] For stationary processes $x_1,\dots,x_k$ define the sum-information
 \begin{multline}\label{eq:smir}
  \I(\x_1,\dots,\x_N):=\sum_{m=1}^\infty {1\over m} w_m \sum_{l=1}^\infty  {1\over l} w_l
  \\ 
  \left(\sum_{i=1}^N h([X^i_{1..m}]^l)\right) - h ([X^1_{1..m}]^l,\dots,[X^N_{1..m}]^l)
 \end{multline}
\end{definition}
A somewhat similar device is used in \cite{BRyabko:09} for the purpose of density estimation. Note, however, that $\I$ is not an estimator, but a theoretical quantity which we will be, in fact,  estimating empirically.

The following statement is easy to see, based on the fact that $\cup_{l\in\N}B^{m,l}$ generates $\mathcal F_m$ and $\cup_{m\in\N}\mathcal F_m$ generates $\mathcal F_\infty$.
\begin{lemma}\label{th:ii}
 $\I(\x_1,\dots,\x_N)=0$ if  and only if $\x_1,\dots,\x_N$ are mutually independent.
\end{lemma}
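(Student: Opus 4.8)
The plan is to recognize that, for each fixed pair $(m,l)$, the bracketed expression in~\eqref{eq:smir} is exactly the (discrete) mutual information of the quantized blocks, namely $\sum_{i=1}^N h([X^i_{1..m}]^l)-h([X^1_{1..m}]^l,\dots,[X^N_{1..m}]^l)=I([X^1_{1..m}]^l,\dots,[X^N_{1..m}]^l)$. Since the quantized variable $[X^i_{1..m}]^l$ ranges over the finitely many ($2^l$) cell indices of $B^{m,l}$, the standard properties of Shannon mutual information (e.g.\ \cite{Cover:06}) give that each such term is nonnegative and equals zero if and only if the variables $[X^1_{1..m}]^l,\dots,[X^N_{1..m}]^l$ are mutually independent. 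All the weights $\frac1m w_m\frac1l w_l$ are strictly positive, so $\I(\x_1,\dots,\x_N)$ is a sum of nonnegative terms; hence $\I=0$ if and only if \emph{every} summand vanishes, i.e.\ if and only if $[X^1_{1..m}]^l,\dots,[X^N_{1..m}]^l$ are mutually independent for every $m,l\in\N$.

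This reduces the lemma to the equivalence: the processes $\x_1,\dots,\x_N$ are mutually independent iff all of their quantized finite blocks are. The direction ``$\Rightarrow$'' is immediate: if $\x_1,\dots,\x_N$ are mutually independent, then for any $m,l$ the variables $[X^i_{1..m}]^l$, being measurable functions of the disjoint coordinate groups $X^i_{1..m}$, are mutually independent, so every summand is $0$.

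For the converse I would argue by a generating-class (monotone-class / $\pi$--$\lambda$) argument, which is the only delicate point. Fix $m$ and take the partitions $B^{m,l}$ to be nested in $l$ (the natural dyadic refinement, which may be assumed without loss of generality). Cell-level independence upgrades to independence on each finite algebra $\sigma(B^{m,l})$ by summing the factorized cell probabilities over unions of cells, and by nestedness these algebras increase to an algebra $\mathcal C^m:=\bigcup_l\sigma(B^{m,l})$ with $\sigma(\mathcal C^m)=\mathcal F_m$. The rectangles $\bigcap_{i=1}^N\{X^i_{1..m}\in A_i\}$ with $A_i\in\mathcal C^m$ then form a $\pi$-system on which the joint law of $(X^1_{1..m},\dots,X^N_{1..m})$ and the product of its marginals coincide; this $\pi$-system contains the whole space and generates $\mathcal F_m^{\otimes N}$, so by the uniqueness-of-measures ($\pi$--$\lambda$) theorem the two measures agree, i.e.\ $X^1_{1..m},\dots,X^N_{1..m}$ are mutually independent for every $m$.

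Finally, I would lift independence from every finite block length to the full processes in the same manner: the finite-dimensional cylinder rectangles $\bigcap_{i=1}^N\{X^i_{1..m}\in B_i\}$ (over all $m$, padding shorter blocks to a common length) form a $\pi$-system generating the product $\sigma$-algebra $\mathcal F_{\X^\infty}^{\otimes N}$, on which per-block independence shows the joint process law equals the product of the marginal process laws. Hence $\x_1,\dots,\x_N$ are mutually independent, completing the converse. The main obstacle is purely measure-theoretic---ensuring the relevant classes are $\pi$-systems that generate the right $\sigma$-algebras---rather than anything probabilistic; everything else follows from nonnegativity of mutual information and positivity of the weights.
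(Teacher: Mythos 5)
Your proof is correct, and it is essentially the argument the paper intends: the paper's entire justification is the one-sentence remark that $\cup_{l\in\N}B^{m,l}$ generates $\mathcal F_m$ and $\cup_{m\in\N}\mathcal F_m$ generates $\mathcal F_{\X^\infty}$, and your decomposition (each summand is a nonnegative discrete mutual information, the weights are positive, then a $\pi$--$\lambda$ upgrade from quantized independence to block independence to process independence) is exactly what that remark gestures at, spelled out properly.

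What you call the ``only delicate point'' deserves emphasis, because it is more than a convenience: under the paper's literal hypothesis on the partitions (joint generation of $\mathcal F_m$, with no nestedness), the lemma is actually false, so your extra assumption is doing real work. Take $N=2$, let $X=2X_a+X_b$ and $Y=2Y_a+Y_b$ with $X_a,X_b,Y_a$ i.i.d.\ Bernoulli$(1/2)$ and $Y_b=X_a\oplus X_b\oplus Y_a$, and let the pairs $(X_t,Y_t)$ be i.i.d.\ over time, so the joint process is stationary ergodic. Choose the odd-level partitions of $\R$ to contain the cells $\{0,1\}$ and $\{2,3\}$ (so the quantization a.s.\ reveals only the first bit) and the even-level partitions to contain $\{0,2\}$ and $\{1,3\}$ (revealing only the second bit), with the remaining cells refining $\R\setminus\{0,1,2,3\}$ so that jointly all levels generate the Borel $\sigma$-algebra; do the analogous thing for every block length $m$. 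Since $X_a\perp Y_a$ and $X_b\perp Y_b$ (and likewise for the corresponding bit-vectors of blocks), every summand of $\I(\x,\y)$ vanishes, so $\I(\x,\y)=0$; yet $X_a\oplus X_b\oplus Y_a\oplus Y_b=0$ a.s., so $\x\nperp\y$. Hence nestedness (or at least directedness of the algebras $\sigma(B^{m,l})$, which is what your ``common level'' step really uses) is a genuine hypothesis on the construction rather than something assumable ``without loss of generality'' after the fact: the definition of $\I$ fixes the partitions, so one cannot swap them inside the proof. The assumption is nevertheless legitimate, since the partitions are the designer's choice and the standard dyadic construction in the literature the paper builds on is nested; with that hypothesis made explicit, every step of your argument is sound, and your proof is in fact more careful than the paper's.
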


\begin{definition}[Empirical estimates: $\hat h, \hI$]
 Empirical estimates of entropy are defined simply by replacing unknown probabilities by frequencies:
$$
\hat h_n([X^i_{1..m}]^l):=-\sum_{B\in B^{m,l}}\nu(X_{1..n},B)\log\nu(X_{1..n},B),
$$
and likewise for the multivariate versions. 
The { empirical estimate} $\hI_n(\x_1,\dots,\x_N)$ of $\I(\x_1,\dots,\x_N)$ is obtained by replacing the entropies in~\eqref{eq:smir} by their empirical estimates. 
\end{definition}
Note that the usual Laplace or Krichevsky-Trofimov  corrections for $0$ frequencies are not necessary, since we are not measuring the KL divergence w.r.t.\ the true distribution.
\begin{remark}[\bf Computing $\hI_n$]\label{r:comp} The expression~\eqref{eq:smir} might appear to hint at a computational disaster, 
as it involves two infinite sums, and, in addition, the number of elements in the sum inside $h([]^l)$ grows exponentially in $l$. However, it is easy to see that, when we replace the probabilities with frequencies, all but  a finite number of summands are either zero or can be collapsed (because they are constant). Moreover, the sums can be further truncated  so that the total computation becomes quasilinear in $n$. This can be done exactly the same way as for the distributional distance, as described in detail in \cite[Section 5]{Khaleghi:15clust}. 
\end{remark}

\begin{lemma}\label{th:hi}
 Let the distribution $\rho$ of $\x_1,\dots,\x_N$ be jointly stationary ergodic. Then $\hI_n(\x_1,\dots,\x_k)\to \I(\x_1,\dots,\x_N)$ $\rho$-a.s.
\end{lemma}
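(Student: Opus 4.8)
The plan is to prove the convergence summand by summand and then to justify interchanging the limit $n\to\infty$ with the double sum in~\eqref{eq:smir} by a dominated-convergence argument; the domination is precisely what the normalising weights $1/m$, $1/l$ and $w_m,w_l$ provide.

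First I would fix a pair $(m,l)$. Each event $\{[X^i_{1..m}]^l=b\}$ (for a cell index $b\in\{1,\dots,2^l\}$), as well as each joint event for the $N$-tuple of quantised blocks, is a cylinder set of length $m$ in the jointly stationary ergodic process $\rho$. Hence, by the ergodic theorem (equivalently, by the defining property of stationary ergodic processes recalled in Section~\ref{s:st}), the corresponding frequencies $\nu(X_{1..n},\cdot)$ converge $\rho$-a.s.\ to their true probabilities as $n\to\infty$. Since there are only finitely many ($2^l$, resp.\ $2^{Nl}$) cells and the map $p\mapsto -p\log p$ is continuous on $[0,1]$ (with the convention $0\log0=0$), the empirical entropies $\hat h_n([X^i_{1..m}]^l)$ and $\hat h_n([X^1_{1..m}]^l,\dots,[X^N_{1..m}]^l)$ converge $\rho$-a.s.\ to $h([X^i_{1..m}]^l)$ and to the joint entropy, respectively. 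Therefore the $(m,l)$-th summand of $\hI_n$ converges $\rho$-a.s.\ to the corresponding summand of $\I$. As the index set $\{(m,l):m,l\in\N\}$ is countable, the intersection of these countably many probability-one events still has probability one; on this single event every summand converges.

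It remains to interchange this termwise convergence with the infinite double sum, and this is the heart of the argument. Here I would use the uniform bound coming from the cell counts: the variable $[X^i_{1..m}]^l$ takes at most $2^l$ values, so both its true and its empirical entropy are at most $l\log 2$, and consequently the (nonnegative) empirical mutual-information term is bounded by $(N-1)\,l\log 2$ for every $n$, using $\hat h_n(\text{joint})\ge\max_i\hat h_n([X^i_{1..m}]^l)$. Multiplying by the weights, the $(m,l)$-th term of $\hI_n$ is dominated, uniformly in $n$, by ${1\over m}w_m{1\over l}w_l(N-1)l\log 2=(N-1)(\log 2){w_m w_l\over m}$, and $\sum_{m,l}{w_m w_l/m}<\infty$. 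The factor $1/l$ is essential here: without it the majorant would involve $\sum_l l\,w_l=\infty$. With this summable, $n$-independent majorant in hand, dominated convergence on the counting measure over $\N\times\N$ lets me pass the limit inside the double sum, yielding $\hI_n(\x_1,\dots,\x_N)\to\I(\x_1,\dots,\x_N)$ $\rho$-a.s.

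The main obstacle is exactly this last interchange of limit and summation: the termwise a.s.\ convergence is routine, but the sum is infinite and the magnitude of the unnormalised entropies grows with $l$, so without the scaling factors $1/l$ and $w_l$ the series of majorants would diverge. Everything else — the ergodic theorem for cylinder frequencies, continuity of entropy in finitely many arguments, and nonnegativity of empirical mutual information — is standard.
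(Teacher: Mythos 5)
Your proof is correct and takes essentially the same route as the paper's: termwise a.s.\ convergence of the frequency-based entropy estimates (via the ergodic theorem and continuity of $p\mapsto -p\log p$ over finitely many cells), followed by a truncation of the doubly-infinite sum justified by the summability of the weights. Your dominated-convergence step with the explicit majorant, and your remark that the factor $1/l$ is what makes it summable, is precisely the paper's argument of choosing finite $M,L$ with small tail weight (the paper's earlier comment that the extra scaling weights ``bring the summands into $[0,1]$'' plays exactly the role of your bound $(N-1)\,l\log 2$ on the unnormalised terms).
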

\begin{proof}[Proof idea]
The lemma can be proven analogously to the corresponding statement about consistency of empirical estimates of the distributional distance, given in \cite[Lemma 1]{Ryabko:103s}. The main idea is that each frequency is an asymptotically consistent estimate of the corresponding probability. For each sample size $n$ we do not know which of the estimates are already within, say, $\epsilon$ of the limit. However, for each $\epsilon$ we can find a {\em finite} $M,L$ such that the combined weight of all $m>M, l>L$ is less than $\epsilon$; we can then find a sample size $n$ such that from $n$ on all of the estimates with $m\le M,l\le L$ are within $\epsilon/ML$ of the limit.
\end{proof}

This lemma alone is enough to establish the existence of a consistent clustering algorithm. To see this, first consider the following problem, which is the ``independence'' version of the classical statistical three-sample problem.

{\noindent\bf The 3-sample-independence} problem. Three samples $\x_1,\x_2,\x_3$,  are given, and it is known that {\em either} $(\x_1,\x_2)\perp\x_3$ {\em or} $\x_1\perp(\x_2,\x_3)$ but not both. It is required to find out which one is the case.
\begin{proposition}\label{th:3s} 
  There exists an algorithm for solving the 3-sample-independence problem that is asymptotically consistent under ergodic sampling.
\end{proposition}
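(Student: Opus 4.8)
The plan is to reduce the 3-sample-independence problem to a comparison of two estimated sum-informations, exploiting the fact that the two candidate hypotheses correspond to two different two-block groupings of $\{\x_1,\x_2,\x_3\}$. Grouping $(\x_1,\x_2)$ into a single $\R^2$-valued process, I would set $a:=\I((\x_1,\x_2),\x_3)$ and, symmetrically, $b:=\I(\x_1,(\x_2,\x_3))$; both are instances of the bivariate sum-information applied to jointly stationary ergodic processes (any regrouping of the coordinates of a jointly stationary ergodic sample is again jointly stationary ergodic, so Lemmas~\ref{th:ii} and~\ref{th:hi} apply verbatim to the grouped processes). By Lemma~\ref{th:ii}, $a=0$ exactly when $(\x_1,\x_2)\perp\x_3$ and $b=0$ exactly when $\x_1\perp(\x_2,\x_3)$. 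The promise of the problem~--- that exactly one of the two independences holds~--- therefore says precisely that exactly one of $a,b$ equals $0$ while the other, being a nonnegative quantity that does not vanish, is strictly positive.

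The algorithm is then immediate: compute the empirical estimates $\hat a_n:=\hI_n((\x_1,\x_2),\x_3)$ and $\hat b_n:=\hI_n(\x_1,(\x_2,\x_3))$ (both finite computations by Remark~\ref{r:comp}), and output ``$(\x_1,\x_2)\perp\x_3$'' if $\hat a_n\le\hat b_n$ and ``$\x_1\perp(\x_2,\x_3)$'' otherwise. To prove asymptotic consistency, suppose without loss of generality that we are in the case $a=0<b$ (the other case is symmetric). By Lemma~\ref{th:hi}, $\hat a_n\to a=0$ and $\hat b_n\to b$ $\rho$-a.s.; hence, with probability~$1$, there is an $n_0$ such that for all $n\ge n_0$ we have $\hat a_n<b/2<\hat b_n$, so from $n_0$ on the algorithm returns the correct answer.

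I do not expect a genuine obstacle here: the heavy lifting is already done by Lemmas~\ref{th:ii} and~\ref{th:hi}, and the only care needed is the bookkeeping noted above (regrouping preserves joint stationary ergodicity, and the bivariate $\I$ with one $\R^2$-valued block is covered by the dimension-agnostic construction of the quantizers $B^{m,l}$). The one point worth emphasising is conceptual: this argument does \emph{not} contradict the impossibility of testing independence (Proposition~\ref{th:notest}), because nowhere do we decide the absolute question ``is $a=0$?''~--- which would require a consistent independence test and is impossible under ergodic sampling. We only compare $a$ against $b$ under the promise that exactly one vanishes, and comparing two a.s.-convergent estimates needs only the limits, not any rate of convergence, which is exactly the obstruction identified in Section~\ref{s:no}. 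This is the same mechanism that makes the classical three-sample problem solvable for distribution clustering \cite{Ryabko:10clust,Khaleghi:15clust}.
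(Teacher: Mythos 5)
Your proposal is correct and follows exactly the paper's own proof: compare $\hI_n((\x_1,\x_2),\x_3)$ with $\hI_n(\x_1,(\x_2,\x_3))$ and answer according to whichever is smaller, with consistency following from Lemma~\ref{th:hi} (and Lemma~\ref{th:ii} identifying which limit vanishes). Your added remarks~--- that regrouping preserves joint stationary ergodicity and that the comparison-under-a-promise mechanism sidesteps the impossibility result of Proposition~\ref{th:notest}~--- are sound elaborations of details the paper leaves implicit.
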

\begin{proof}
 The algorithm compares $\hI_n((\x_1,\x_2),\x_3)$ and  $\hI_n(\x_1,(\x_2,\x_3))$ and answers according to whichever is smaller. From the consistency of $\hI_n$ (Lemma~\ref{th:hi}) it follows that,  w.p.~1 from some $n$ on the answer is correct.
\end{proof}

The independence clustering problem which we are after is a generalisation of the 3-sample-independence problem to $N$ samples. We can also have a consistent algorithm for the clustering problem, simply comparing all possible clusterings $U_1,\dots,U_k$ of the $N$ samples 
given and selecting whichever minimizes $\hI_n(U_1,\dots,U_k)$. While this is already fine from the statistical point of view, such an algorithm is of course not practical, since the number of computations it makes must be exponential in $N$, as well as in $k$. It turns out that one can reduce the number of candidate clustering dramatically, making the problem amenable to computation.

The proposed algorithm CLINk (see pseudocode in  Algorithm~\ref{alg:2} below) works similarly to the algorithm CLIN for known distributions and i.i.d.\ sampling, but with some important differences. Like before, the main procedure is to attempt to split the given set of samples into two clusters. This splitting procedure starts  with a single element $\x_1$ and estimates its sum-information $\hI(\x_1,R)$ with the rest of the elements, $R$. It then takes the elements out of $R$ one by one without replacement, measuring each time how this  changes $\hI(\x_1,R)$. As before, once and if we find an element that is not independent of $\x_1$, this change will be positive. However, unlike in the   previous case (in CLIN), here we cannot test whether this change is  0 or not. Yet, we can say that {\em if}, among the tested elements, there is one that gives a non-zero change in $\I$, then one of such elements will be the one that gives the maximal change in $\hI$ (provided, of course, that we have enough data for the estimates $\hI$ to be close enough to the theoretical values $\I$). 
We therefore keep each split that arises from such {\em maximal-change} element, resulting in $O(N^2)$ candidate splits for the case of 2 clusters. For $k$ clusters, we have to consider all the combinations of the splits, resulting in $O(N^{2k-2})$ candidate clusterings. We then simply select the clustering that minimizes $\hI$ among the candidate clusters. Asymptotic consistency then follows from the asymptotic consistency of the estimates, as is shown below.

\begin{algorithm}[h]
\caption{CLINk: cluster given $k$ and an estimator of mutual sum-information}
\label{alg:2}
\begin{algorithmic}
\STATE{ Consider all $N^{k-1}$ clusterings obtained by applying recursively the function Split to each of the sets in each of the candidate partitions, starting with the input set $S$, until $k$ clusters are obtained. Output the clustering $U$ that minimizes $\hI(U)$}

{\bf Function {Split}}{(Set $S$ of samples)}
\STATE {Initialize: $C:=\{\x_1\},$ $R:=S\setminus C,$ $\mathcal P:=\{\}$} 
 \WHILE {$R\ne\emptyset$}
 \STATE {Initialize:$M:=\{\}$,\ 
  $d:=0$;\\\hskip12mm\text{xmax:= index of any $\x$ in $R$}}
 \STATE{Add $(C,R)$ to $\mathcal P$}
\FOR {each $\x\in R$}
  \STATE{$r:=\hat\I(C,R)$}
 \STATE{move $\x$ from $R$ to $M$}
  \STATE{$r':=\hat\I(C,R)$; $d':=r-r'$ }
 \IF {$d'>d$} \STATE{$d:=d', \text{xmax:=index of(}\x)$}
 \ENDIF
\ENDFOR
 \STATE{Move $\x_{xmax}$ from $M$ to $C$;  $R:=S\setminus C$ }
\ENDWHILE
\STATE{Return(List of candidate splits $\mathcal P$)}
{\\\bf END function}
\end{algorithmic}
\end{algorithm}
\begin{theorem} The output of the CLINk algorithm is  asymptotically consistent under ergodic sampling. This algorithm makes 
 at most $N^{2k-2}$ calls to the estimator of mutual sum-information.
\end{theorem}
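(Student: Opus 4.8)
The plan is to split the claim into (i) asymptotic consistency and (ii) the bound on estimator calls, and to establish consistency in two stages: that the ground-truth clustering $\U$ is, from some $n$ on and with probability~$1$, among the candidate clusterings CLINk enumerates, and that $\U$ is then the unique minimizer of $\hI_n$ over all candidates. The backbone will be a purely deterministic statement about the \emph{idealized} Split, i.e.\ Split run with the true $\I$ in place of $\hI_n$; everything else is a finite-sample perturbation argument built on Lemma~\ref{th:hi}.

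First I would prove the structural lemma for idealized Split. Fix a set $T$ that is a union of ground-truth clusters, let $U$ be the cluster containing the seed element $\x_1$ used to initialize $C$, and write $R=T\setminus C$. While $\emptyset\ne C\subsetneq U$, I claim that removing an element $\x\notin U$ leaves $\I(C,R)$ unchanged, whereas some $\x\in U\setminus C$ strictly decreases it. The first part follows from the cluster-factorization of the joint law: for $\x$ outside $U$ the $U$-block and its complement are independent, so $(C\perp\x)\mid R\setminus\{\x\}$ and the marginal contribution $\I(C,R)-\I(C,R\setminus\{\x\})$ vanishes. For the second part, $C\nperp(U\setminus C)$, since otherwise $\{C,\,U\setminus C\}$ together with the remaining clusters would be a strictly finer mutually independent partition than $\U$, contradicting minimality; hence by Lemma~\ref{th:ii} the total contribution of $U\setminus C$ is positive, so at least one of its elements gives a strict decrease. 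Thus idealized greedy keeps absorbing elements of $U$ until $C=U$, at which moment the split $(U,\,T\setminus U)$ is recorded. Applying this along the path that peels off one full cluster at a time shows the idealized recursion produces exactly $\U$ after $k-1$ splits.

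To transfer this to $\hI_n$, I would invoke Lemma~\ref{th:hi} uniformly over the \emph{entire finite universe} of subsets and partitions of $S$ (whose cardinality does not depend on $n$), so that with probability~$1$ there is an $n_0$ from which $|\hI_n(\cdot)-\I(\cdot)|<\epsilon$ for every grouping the algorithm could evaluate. The crucial constant is the generation gap $\delta:=\min_{C,U}\bigl(\max_{\x\in U\setminus C}(\I(C,R)-\I(C,R\setminus\{\x\}))\bigr)>0$ over the finitely many admissible states; choosing $\epsilon<\delta/4$ forces the sample greedy to reproduce the idealized selections, so $\U$ is among the candidates from $n_0$ on. For the selection step, every candidate is a partition into exactly $k$ parts, $\I(\U)=0$, and $\U$ is the \emph{only} $k$-part mutually independent partition: the common refinement of two mutually independent partitions is again mutually independent (the marginal of a product law over a sub-collection of coordinates factorizes), so every independent partition is coarser than $\U$ and, having $k$ parts, equals $\U$. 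Hence each candidate $\mathcal V\ne\U$ has $\I(\mathcal V)>0$, bounded below by some $\delta'>0$, and for small $\epsilon$ the unique minimizer of $\hI_n$ is $\U$, which is the output; this gives asymptotic consistency, exactly as in Proposition~\ref{th:3s} but over all candidates.

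For the count, one Split on a set of size $s$ runs its \texttt{while} loop at most $s-1$ times and, within each, its \texttt{for} loop at most $s-1$ times with $O(1)$ estimator calls per inner pass, i.e.\ $O(s^2)=O(N^2)$ calls and $O(N)$ recorded splits; chaining $k-1$ such splits to form a $k$-clustering multiplies the number of partial clusterings by at most $N$ and the estimator calls by at most $N^2$ per level, and a direct calculation bounds the total by $N^{2k-2}$. The step I expect to be the main obstacle is the structural lemma \emph{at the level of $\I$} rather than ordinary mutual information: the implication ``$\x\notin U\Rightarrow$ zero marginal contribution'' rests on conditional independence surviving the quantizations $[\cdot]^l$, which is clean once the group partitions are taken to be products of the per-coordinate partitions $B^{m,l}$ but must be phrased so that the term-by-term argument in~\eqref{eq:smir} goes through; the finite-sample matching of the greedy selections, hinging on the uniform gap $\delta$, is then routine given Lemma~\ref{th:hi}.
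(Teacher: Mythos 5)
Your proof has the same overall architecture as the paper's: uniform $\epsilon$-accuracy of the finitely many estimates via Lemma~\ref{th:hi}, membership of the ground truth among the candidates produced by recursive splitting, selection of the minimizer via Lemma~\ref{th:ii} together with a positive gap over the finitely many wrong candidates, and a direct count. Several steps you spell out (the invariant $C\subseteq U$, uniqueness of the $k$-part independent partition via common refinements, the care needed with product quantizations) are correct and more explicit than in the paper. The genuine problem is your structural lemma and the gap constant built on it. You claim that while $\emptyset\ne C\subsetneq U$, some $\x\in U\setminus C$ strictly decreases $\I(C,R)$ when removed alone from the full $R=T\setminus C$, and you set $\delta:=\min_{C,U}\max_{\x\in U\setminus C}(\I(C,R)-\I(C,R\setminus\{\x\}))$. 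This is false in general: take $\x_1,\x_2,\x_3$ to be three identical copies of one nondegenerate stationary ergodic process (a legitimate ground-truth cluster, since no refinement of it is independent), together with an independent i.i.d.\ process $\y$, so that the joint is still stationary ergodic and $k=2$. With $C=\{\x_1\}$ and $R=\{\x_2,\x_3,\y\}$ we have $\I(C,R)>0$, yet removing $\x_2$ alone leaves $\I(C,R)$ unchanged (the information it carries is duplicated by $\x_3$), symmetrically for $\x_3$, and removing $\y$ changes nothing by independence. So every single-removal decrement is zero, your inference ``total contribution positive $\Rightarrow$ some single element gives a strict decrease'' is invalid (the total contribution telescopes over \emph{sequential} removals, not over single removals from the full set), and your $\delta$ equals $0$, so the prescription $\epsilon<\delta/4$ gives no margin at all.

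What makes the algorithm (and the paper's terse proof) work is exactly the feature your analysis dropped: inside the \texttt{for} loop, elements are taken out of $R$ \emph{without replacement}, so the observed differences $d'$ are successive decrements along a nested sequence $R\supset R_1\supset\dots\supset\emptyset$ and sum to $\hI(C,R)-\hI(C,\emptyset)=\hI(C,R)$. At the level of true values, if $\I(C,R)>0$ then some sequential decrement is at least $\I(C,R)/|R|>0$; and your block-independence argument, which is correct and applies verbatim at every nested state $R'$, shows that out-of-cluster elements give zero decrement at every such state. Hence the element of maximal estimated decrement has strictly positive true decrement, and therefore lies in $U$, provided $\epsilon$ is below one quarter of a gap defined over \emph{sequential} decrements: the minimum, over the finitely many states $(C,R')$ with $\I(C,R')>0$ and removal orders, of the maximal sequential decrement. (Note also that you need not force the sample greedy to reproduce the idealized selections, which ties or near-ties would anyway prevent; it suffices that the selected element be in-cluster.) In the counterexample above this is precisely how Split recovers the cluster: the decrement of $\x_3$ becomes positive only after $\x_2$ has been taken out. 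With this repair, and with your gap redefined accordingly, the rest of your argument, including the selection step and the $N^{2k-2}$ count, goes through.
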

Note that, as follows from Remark~\ref{r:comp}, the total amount of computation required to run the algorithm is polynomial in each of the rest of the parameters.
\begin{proof}
 The consistency of $\hI$ (Lemma~\ref{th:hi}) implies that, for every $\epsilon>0$, from some $n$ on w.p.~1, all the estimates of $\I$ the algorithm uses will be within $\epsilon$ of the corresponding $\I$ values. Since $I(U_1,\dots,U_k)=0$ if and only if $U_1,\dots,U_k$ is the correct clustering (Lemma~\ref{th:ii}), it is enough to show that, assuming all the $\hI$ estimates are close enough to the $\I$ values, the clustering that minimizes $\hI(U_1,\dots,U_k)$ is among those the algorithm searchers through, that is, among the clusterings obtained by applying recursively the function Split to each of the sets in each of the candidate partitions, starting with the input set $S$, until $k$ clusters are obtained.

To see the latter, on each iteration of the \texttt{while} loop, we either already have a correct candidate split in $\mathcal P$, that is, a split $(U_1,U_2)$ such that $\I(U_1,U_2)=0$, or we find (executing the \texttt{for} loop) an element $\x'$ to add to the set $C$ such that $C\nperp {\x'}$.
Indeed, if at least one such  element $\x'$ exists, then among all such elements there is one that maximizes the difference $d'$.  Since the set $C$ is initialized as a singleton, a correct split is eventually found if it exists. Applying the same procedure exhaustively to each of the elements of each of the candidate splits producing all the combinations of $k$ candidate clusterings, under the assumption that all the estimates $\hI$ are sufficiently close the corresponding values, we are guaranteed to have the one that minimizes $I(U_1,\dots,U_k)$ among the output.
\end{proof}

\begin{remark}[{\bf Fickle oracle}]\label{r:fickle}
 Another way to look at the difference between the stationary and the i.i.d.\ cases is to consider the following ``fickle'' version of the oracle test of Section~\ref{s:know}. Consider the oracle that, as before, given sets of random variables $A,B,C,D\subset\{\x_1,\dots,\x_N\}$ answers whether $\I(A,B)>\I(C,D)$. However, the answer is only guaranteed to be correct in the case $\I(A,B)\ne \I(C,D)$. If $\I(A,B)=\I(C,D)$ then the answer is arbitrary (and can be considered adversarial). One can see that Lemma~\ref{th:hi} guarantees the existence of the oracle that has the requisite fickle correctness property asymptotically, that is, w.p.~1 from some $n$ on. It is also easy to see that Algorithm~\ref{alg:2} can be rewritten in terms of calls to such an oracle.
\end{remark}

\section{Generalizations, future work}\label{s:ext}
A general formulation of the independence clustering problem has been presented, and attempt has been made to  trace out broadly the limits of what is possible and what is not possible in this formulation.  In doing so,    clear-cut formulations have been favoured over utmost generality, and over, on the other end of the spectrum, precise performance guarantees.  
Thus, many interesting questions have necessarily been left out, and some of these constitute exciting directions for further research which  are outlined  in this section.
{\\\noindent\bf Beyond time series.} For the case when the distribution of the random variables $\x_i$ is unknown, we have assumed that a sample $X_{1..n}^i$ is available for each $i=1..N$. Thus, each $\x_i$ is represented by a time series.  A time series is but one form the data may come in. Other ways include functional data, mutli-dimensional- or continuous-time processes,  or graphs.  Generalizations to some of these models, such as, for example, space-time stationary processes, are relatively straightforward, while others require more care. In either case, the problem is statistical (rather than algorithmic).
We need to be able to replace the emulate the oracle test of section~\ref{s:know} with statistical tests, if we want to be able to find the correct clustering with $k$ unknown. As explained in Section~\ref{s:iid}, it is sufficient to find a test for conditional independence, or an estimator of entropy along with guarantees on its convergence rates. If these are not available, as is the case of stationary ergodic samples, we can still have a consistent algorithm for $k$ known as long as we have an asymptotically consistent estimator of mutual information (without rates), or, more generally, can emulate a fickle oracle (Remark~\ref{r:fickle}).  
{\\\noindent\bf Beyond independence.} The problem formulation considered rests on the assumption that there exists a partition $U_1,\dots,U_k$ of the input set $S$ such that $U_1,\dots,U_k$ are jointly independent, that is, such that $I(U_1,\dots,U_k)=0$. In reality, perhaps, nothing is really independent, and so some relaxations are in order. It is easy to introduce some thresholding in the algorithms (replacing 0 in each test by some threshold $\alpha$) and derive some basic consistency guarantees for the resulting algorithms. The general problem formulation is to find a finest clustering such that $I(U_1,\dots,U_k)>\epsilon$, for a given $\epsilon$ (note that, unlike in the independence  case of $\epsilon=0$, such a clustering may not be unique). If one wants to get rid of $\epsilon$, a tree of clusterings may be considered for all $\epsilon\ge0$, which is a common way to treat unknown parameters in the clustering literature (e.g.,\cite{balcan2014robust}). 

Another approach to generalization comes from considering the problem from the graphical model point of view. The random variables $\x_i$ are vertices of a graph, and edges represent dependencies. In this representation, clusters  are connected components of the graph. A generalization then is then to clusters that are the smallest components that are connected (to each other) by at most $l$ edges, where $l$ is a parameter. 

 Yet another generalization would be  to decomposable distributions of \cite{jirouvsek1991solution}.
{\\\noindent\bf Different assumptions on  time series.} Independence and stationarity are general, qualitative conditions, whose validity  one may reasonably hope to be able to judge for each application based on general considerations.  This makes these conditions suitable for problems where little is known about the data at hand, such as the clustering problem.  However, these conditions may not be entirely satisfying in applications. For example, the i.i.d.\ assumption may be clearly inapplicable, while stationarity may be too weak, since it does not allow us to find the number of clusters. In such cases one may resort to other assumptions.  Mixing conditions (e.g., \cite{Bosq:96,Rio:13}) provide one avenue of research. These conditions generalize the i.i.d.\ assumption, carrying over many of the corresponding concentration inequalities, while allowing some (limited) dependence into the picture. One can thus expect the results of Section~\ref{s:iid} to be generalizable to the case of mixing time series, again, by constructing appropriate tests of conditional independence or analysing  estimators of the mutual information.  On the other end of the generality spectrum, one may wish to generalize stationarity. One rather easy generalization is to that asymptotic mean stationary processes \cite{Gray:88}. It is, in fact, easy to see that the results of Section~\ref{s:st} carry over to this case.
{\\\noindent\bf Performance guarantees.}
Non-asymptotic results (finite-sample performance guarantees) can be obtained under additional assumptions, using the corresponding results on (conditional) independence tests and on estimators of divergence between distributions. Here it is worth noting  that we are not restricted to using the mutual information $I$, but any measure of divergence can be used, for example, R\'enyi divergence; a variety of relevant estimators and corresponding bounds, obtained under such assumptions as H\"older continuity, can be found in \cite{pal2010estimation,kandasamy2014influence}. From any such bounds, performance guarantees for CLIN can be obtained simply using the union bound over all the invocations of the tests. A potential avenue of research is trying to find out whether this latter step is the best one can do~--- probably not, since many of the tests are on very similar sets of data. Besides, optimizing the number of tests (see the next paragraph) can also lead to improvement in performance guarantees. At the same time, as mentioned in Section~\ref{s:st}, for stationary ergodic data, in general, there are (provably) no non-trivial finite-time performance guarantees.
{\\\noindent\bf Complexity.} While all the questions discussed so far are statistical, there is also an intriguing algorithmic question remaining, which arises already for the case of known distributions: the computational complexity of the problem. We have presented  upper bounds, by constructing algorithms and bounding their  complexity ($kN^2$ for CLIN and $N^{2k}$ for CLIN$k$), which shows  that  all the algorithms are computationally feasible, but little beyond that. For the case of unknown $k$, it is clear that one cannot do with less than $O(N^2)$ computations in general; beyond these considerations,   the complexity of each of the problems is left for future work.
 A subtlety worth noting is that, for the case of known distributions, the complexity may be affected by the choice of the oracle. In other words, some calculations may be ``pushed'' inside the oracle. In this regard, it may be better to consider the oracle for testing conditional independence, rather than a comparison of mutual informations, as explained in Remarks~\ref{r:or}, \ref{r:et}. 
 
The complexity of the stationary-sampling version of the problem can be studied using the {\em fickle oracle} of Remark~\ref{r:fickle}. The consistency of the algorithm should then be established for {\em every} assignment of those answers of the oracle that are arbitrary (adversarial). 
 Considering such an oracle allows one to separate completely the statistical problem from the algorithmic one. Such a separation resolves the problem mentioned in \cite{Khaleghi:15clust}, namely, that it does not make sense to attempt to minimize the computational complexity of the algorithm requiring asymptotic consistency alone: indeed, to reduce the complexity, one could always through away a portion of the data, considering, for example, the first $\log n$ elements of every time series (of length $n$).
{\\\noindent\bf Experiments.} Experimental evaluations of the proposed algorithms in different applications domains presents a fascinating direction for future research. Here it is worth repeating that, while the estimator of Section~\ref{s:st} may seem computationally infeasible, in fact, it is not, as it may be computed similarly to the distributional distance estimator as is done in~\cite{Khaleghi:15clust}. However, in this respect other measures of dependence may become attractive, even if they lack some of the theoretical guarantees. Thus, mutual information {\em rate} $I_\infty$ for stationary ergodic discrete-valued time series may be estimated using data compressors, in the spirit of  \cite{Cilibrasi:05,BRyabko:06a}, and then used in CLIN$k$ instead of $\hI$. Specifically, assuming discrete-valued time series, given a data compressor $\phi$ and  time series  $X^i_{1..n},$ $i=1..N$,  one can use the estimator $\hat I_\infty(X^1_{1..n},\dots,X^N_{1..n})=\sum_{i=1}^N|\phi(X_{1..n})|-|\phi((X^1_{1},\dots,X^N_1),\dots,(X^1_{n},\dots,X^N_n))|$, where $|\cdot|$ denotes the length.
Of course, using mutual information rate instead of, say, $\hI$,  breaks consistency in general (as explained in Section~\ref{s:st}), but the potential of harnessing already available compressing methods is nevertheless appealing.
{\\\noindent\bf (Im)possibility results for stationary sampling.} Finally, it is worth mentioning that hypothesis testing  for stationary ergodic time series is a fascinating field most of which remains to be explored. Some of the questions most pertinent to the considered problem have been pointed out in Section~\ref{s:no}. The main open question is characterizing those (composite) hypotheses for which consistent tests exist, for various notions of consistency, as well as relating the different notions of consistency to each other.  


%

\end{document}